\newcommand{\Lap}{{L}}
\newcommand{\namealgo}{EIG-SE(3)\xspace}
\newcommand{\govse}{Govindu-SE(3)\xspace}
\providecommand{\abs}[1]{\lvert#1\rvert} 
\DeclareMathOperator{\med}  {med}
\DeclareMathOperator{\trace}{tr}
\DeclareMathOperator{\rank} {rank}
\DeclareMathOperator{\diag} {diag}
\DeclareMathOperator{\nullsp}  {null}
\newcommand{\vzero} {\mathbf{0}}  
\newcommand{\vone} 	{\mathbbm{1}}
\newcommand{\quarta}{ [ 0 \thickspace 0  \thickspace 0 \thickspace 1]}
\newcommand{\ra}[1]{\renewcommand{\arraystretch}{#1}}
\theoremstyle{plain}
\newtheorem{proposition}{Proposition}
\begin{document}

\title{Spectral Motion Synchronization in SE(3)}

\author{Federica Arrigoni, Andrea Fusiello\\
DIEGM - University of Udine\\
Via delle Scienze, 208 - Udine (Italy)
\and
Beatrice Rossi\\
AST Lab - STMicroelectronics\\
Via Olivetti, 2 - Agrate Brianza (Italy)
}

\maketitle

\begin{abstract}
This paper addresses the problem of motion synchronization (or
averaging) and describes a simple, closed-form solution based on a
spectral decomposition, which does not consider rotation and
translation separately but works straight in $SE(3)$, the manifold of
rigid motions.
Besides its theoretical interest, being the first closed form solution
in $SE(3)$, experimental results show that it compares favourably with
the state of the art both in terms of precision and speed.

\end{abstract}

\section{Introduction}

In this paper we address the \emph{motion synchronization} (a.k.a.~\emph{motion
averaging} or \emph{motion registration}) problem in the Special Euclidean Group, $SE(3)$, which consists in
recovering $n$ \emph{absolute} motions, i.e. rigid 3D displacements expressed 
in an absolute (external) coordinate system, starting from a redundant set of 
\emph{relative} (pairwise) motions.  This problem appears in the context of 
\emph{structure-from-motion} (SfM) --   where the absolute motions represent
orientations and positions of cameras capturing a 3D scene, and \emph{multiple point-set registration} -- that requires to find the rigid transformations that bring multiple 3D point-sets into alignment.

In the literature on multiple point-set registration,
the origins of motion synchronization can be traced back to the \emph{frame space} methods \cite{Sharp02} that optimize the internal coherence of the network of rotations and translations applied to the local coordinate frames, as opposed to solutions that optimize a cost function depending on the distance of corresponding points (e.g. \cite{Pen96,BenShm98}).

In the structure-from-motion literature, \emph{global} methods, that first solve for the motion by optimizing the network of relative motions and leave the 3D structure recovery at the end, are fairly recent (e.g.  \cite{MartinecP07}), although the origins of these approaches can be traced back to \cite{Govindu01}. 

Almost all the techniques address the motion synchronization problem by breaking it up into rotation and translation, and solving the two synchronization problems separately.

For what regards \emph{rotation synchronization}, a theoretical analysis of the problem is reported  in \cite{averaging}. The absolute rotations can be recovered by using the quaternion representation of $SO(3)$, as done in \cite{Govindu01}, or by distributing the error over cycles in the graph of neighbouring views \cite{Sharp02}. 
In \cite{Hartley11} a cost based on the $\ell_1$-norm is used to average relative rotations, where each absolute rotation is updated in turn using the Weiszfeld algorithm.  Martinec in \cite{MartinecP07} casts the problem as the optimization of an objective function based on the $\ell_2$-norm of the compatibility error between relative estimates and unknown absolute orientations, and this approach is extended in \cite{Arie12} where approximate solutions are computed either via spectral decomposition or semidefinite programming.
The sum of unsquared deviations is proposed in \cite{Wang2013} as a more robust self consistency error.
Chatterjee in \cite{Chatterjee_2013_ICCV} exploits the Lie-group structure of  rotations, and combines an $\ell_1$ averaging in the tangent space with an iteratively reweighted least squares (IRLS) approach. In \cite{RGodec} the rotation synchronization problem is reformulated in terms of low-rank and sparse matrix decomposition.

As for \emph{translation synchronization} methods, a discriminating factor relevant to our analysis is whether they use only relative motion information, or, in addition, exploit image point correspondences (e.g.~\cite{MartinecP07,Snavely14}). Let us focus on the former, which are more similar to our $SE(3)$ approach.
In  \cite{Govindu01} absolute positions are initialized as the least squares solution of a linear system of equations in the pairwise directions and orientations, and they are then improved through IRLS. In \cite{Brand04}  a fast spectral solution to translation synchronization is proposed by reformulating the problem in terms of graph embedding. The method presented in \cite{Ozyesil2013stable} first computes pairwise directions through a robust subspace estimation and then derives absolute translations using a semidefinite relaxation. In \cite{Jiang13} a linear solution which minimizes a geometric error in camera triplets is presented, while in \cite{moulon13} relative translations are computed through an a-contrario trifocal tensor estimation, and then absolute positions are recovered by using an $\ell_\infty$ formulation.

A different approach for motion synchronization is followed in \cite{Govindu04} where rotations and translations are \emph{jointly} considered.
This method exploits the Lie-group structure of $SE(3)$ and uses an iterative scheme in which at each step the absolute motions are approximated by averaging relative motions in tangent space. In \cite{Govindu06} robustness is introduced through random sampling in the measurement graph. 
Originally proposed in the SfM framework, this technique was also applied to
multiple point-set registration \cite{Govindu14} and simultaneous localization and mapping (SLAM) \cite{Agrawal06}, where   additional information about data reliability is introduced in the form of covariance matrices.

In this paper we propose a novel method for synchronizing relative motions in $SE(3)$, 
based on  computing the least four eigenvectors of a $4n \times 4n$ matrix.
Our  approach is the first one that works in $SE(3)$ \emph{and} has a closed-form solution, being based on a spectral decomposition. 
It can be seen as the extension to $SE(3)$ of the spectral synchronization proposed in \cite{Singer2011} for $SO(2)$ and generalized in \cite{Singer2011b,Arie12} to $SO(3)$.

The simple matrix formulation of our method leads immediately to a weighted formulation, in much the same way as \cite{Wang2013} did for $SO(N)$, that allows to  embed it  into an IRLS scheme in order to handle rogue measurements.

Experimental results on synthetic and real data show that it compares favourably with
the state of the art both in terms of accuracy and efficiency.


\section{Our Method}

The \emph{motion synchronization} problem consists in recovering $n$ \emph{absolute} motions, i.e. rigid displacements in $\mathbb{R}^3$ expressed in an absolute (external) coordinate system, starting from a redundant set of \emph{relative} (pairwise) motions. Such relative information is usually corrupted by a diffuse noise, in addition to sparse gross errors (outliers).
Let $\mathcal{E} \subseteq \{1,2,\dots,n\} \times \{1,2,\dots,n\}$ denote the set of available pairs, which can be viewed as the set of edges of an undirected finite simple graph $\mathcal{G} = (\mathcal{V}, \mathcal{E})$, where vertices in $\mathcal{V}$ correspond to absolute motions. 
In practical applications this graph is far from complete, due to the lack of overlap between some pairs of images/scans.
However, there is a significant level of redundancy among relative motions in general datasets, which can be used to distribute the error over all the nodes, avoiding drift in the solution.

Each motion can be viewed as an element of the Special Euclidean Group $SE(3)$, which is the semi-direct product of the Special Orthogonal Group $SO(3)$ with $\mathbb{R}^3$. As a matrix group, $SE(3)$ is a subgroup of the General Linear Group GL(4), thus the inverse of a displacement and composition of displacements reduce to matrix operations. 	
Accordingly, each absolute motion is described by a homogeneous transformation
\begin{equation}
M_i =
\begin{pmatrix}
R_i & \mathbf{t_i} \\
\vzero & 1
\end{pmatrix} \in SE(3)
\end{equation}
where $R_i \in SO(3)$ and $\mathbf{t_i} \in \mathbb{R}^3$ represent the rotation and translation components of the i-th transformation. Similarly, each relative motion can be expressed as
\begin{equation}
M_{ij} =
\begin{pmatrix}
R_{ij} & \mathbf{t_{ij}} \\
\vzero & 1
\end{pmatrix} \in SE(3)
\end{equation}
where $R_{ij} {\in} SO(3)$ and $\mathbf{t_{ij}} {\in} \mathbb{R}^3$ encode the
transformation between frames $i$ and $j$. The link between absolute and
relative motions is encoded by the \emph{compatibility constraint}
\begin{equation}
M_{ij} = M_i M_j^{-1}
\label{eq_compatibility}
\end{equation}
which is equivalent to $ R_{ij} = R_i R_j^{\mathsf{T}} $ and $ \mathbf{t_{ij}} = - R_i R_j^{\mathsf{T}} \mathbf{t_j} + \mathbf{t_i}$
by considering separately the rotation and translation terms.
Relative motions can be seen as measurements for the ratios of the unknown group
elements. Finding group elements from noisy measurements of their ratios is also
known as the \emph{synchronization} problem \cite{Giridhar06,Singer2011}.


The remainder of this section is organized as follows.  In Sec.~\ref{sec_exact} we describe
properties that hold when all the relative information is exact, necessary to
define our technique. Then we derive our spectral
solution to motion synchronization (Sec.~\ref{sec_noise}).  
In Sec.~\ref{sec_outliers} our method
is embedded into an IRLS framework in order to handle outliers among relative
motions.
Finally, Sec.~\ref{sec_SEN} briefly presents the extension of our method in $SE(N)$.

\subsection{The Exact Case}
\label{sec_exact}

The absolute transformations can be recovered from \eqref{eq_compatibility} --
up to a global motion -- if we express it in a useful equivalent way that takes
into account all the relative information at once. For simplicity of exposition,
we first consider the case where all the pairwise motions are available.

Let $X \in \mathbb{R}^{4n \times 4n}$ denote the block-matrix containing the
ideal (noise free) relative motions and let $M \in \mathbb{R}^{4n \times 4}$ be
the stack of the absolute motions, namely
\begin{equation}
M=
\begin{bmatrix}
M_1 \\
M_2 \\
\dots \\
M_n
\end{bmatrix},
\quad
X =
\begin{pmatrix}
I_4 & M_{12} & \dots & M_{1n} \\
M_{21} & I_4 & \dots & M_{2n} \\
\dots &  &  & \dots \\
M_{n1} & M_{n2} & \dots & I_4 \\
\end{pmatrix}
\end{equation}
where $I_4$ indicates the $4 \times 4$ identity matrix. If $ M^{-\flat}  \in \mathbb{R}^{4 \times 4n}$ is the concatenation of the inverse of absolute motions, i.e. 
$ M^{-\flat} =
\begin{bmatrix}
M_1^{-1} & M_2^{-1} & \dots & M_n^{-1}
\end{bmatrix}$, then the compatibility constraint turns into
$
X = M  M^{-\flat} 
$,
and hence $\rank(X) = 4$. 
Note that here $X$ is not symmetric positive semidefinite, in contrast to the case of $SO(3)$.
Since $ M^{-\flat}  M = n I_4$, we obtain
\begin{equation}
XM = n M
\label{eq_eig_full}
\end{equation}
which means that -- in the absence of noise -- the columns of $M$ are 4 (independent) eigenvectors of $X$
associated to the eigenvalue $n$. 
Equation \eqref{eq_eig_full} is equivalent to 
\begin{equation}
(n I_{4n}  - X)M  = 0.
\label{eq_svd_full}
\end{equation}
Thus the columns of $M$ are a basis for the $4$-dimensional null-space of $\Lap = (n I_{4n}  - X)$. The matrix $\Lap$ resembles a block Laplacian, as it will we clarified ahead.

Conversely, any basis $U$ for $\nullsp(\Lap )$ will not coincide with
$M$ in general, since it will not be composed of euclidean
motions. Specifically, it will not coincide with $\quarta$
in every fourth row. 
In order to recover $M$ from $U$ it is sufficient to choose a different basis for $\nullsp(\Lap )$ that satisfies such constraint, which can be found by taking a suitable linear combination of the columns of $U$.
More precisely, let $P \in \mathbb{R}^{n \times 4n}$ be the 0-1 matrix  such that $PU \in \mathbb{R}^{n \times 4}$ consists of the rows of $U$ with indices multiple of four. 
The coefficient $\mathbold{\alpha}, \mathbold{\beta} \in \mathbb{R}^4$ of the linear combination are solution of 
\begin{equation}
\begin{gathered}
P U \mathbold{\alpha} = \mathbf{0},  \quad 
P U \mathbold{\beta} = \mathbf{1}
\end{gathered}
\label{eq_ls}
\end{equation}
where the first equation has a three-dimensional solution space. Let
$\mathbold{\alpha_1}, \mathbold{\alpha_2}, \mathbold{\alpha_3}$ be a basis for
the null-space of $PU$. Thus the columns of $M$ corresponding to rotations coincide (up to a permutation) with 
$[U \mathbold{\alpha_1},U \mathbold{\alpha_2}, U \mathbold{\alpha_3}]$
and $M$ is recovered as $M =  U [ \mathbold{\alpha_1},
  \mathbold{\alpha_2},  \mathbold{\alpha_3}, \mathbold{\beta}]$.

Note that this post-processing on the eigenvectors is not required for the spectral method in $SO(3)$, since any orthogonal basis for the null-space of $L$ coincides (up to a permutation) with the stack of the absolute rotations.


We now consider the case of missing data, in which the graph $\mathcal{G}$ is not complete. In this situation missing pairwise motions correspond to zero blocks in $X$. Let $A \in \mathbb{R}^{n \times n}$ be the adjacency matrix of $\mathcal{G}$ and 
let $D \in \mathbb{R}^{n \times n}$ be the degree matrix of $\mathcal{G}$, i.e.
the diagonal matrix that contains the degree of node $i$ in its entry $D_{i,i}$.
It can be seen that Eq.~\eqref{eq_svd_full} generalizes to
\begin{equation}
((D-A)\otimes \vone_{4 \times 4}) \circ X) M =0
\label{eq_svd_missing}
\end{equation}
where $\otimes$ denotes the Kronecker product and $\circ$ denotes the Hadamard product. The matrix $(D-A)$ is the Laplacian matrix of the graph $\mathcal{G}$, which gets ``inflated" to a $4 \times 4$-block structure by the Kronecker product with  
$\vone_{4 \times 4}$ (a matrix filled by ones), and then is multiplied entry-wise with $X$.
Thus the columns of $M$ are a basis for the $4$-dimensional null-space of $\Lap = ((D-A)\otimes \vone_{4 \times 4})\circ X$.

If $\mathcal{G}$ is complete then $D=(n-1)I_n$ and $A= \vone_{n \times n} -I_n$, hence the matrix $\Lap$ reduces to the previous one.

\subsubsection*{Weighted graph.}
In some applications we are given non-negative weights $w_{ij}$ that reflect the
reliability of the pairwise measurements. In other words, $\mathcal{G}$ is a weighted graph with real weights, stored in the 
the symmetric adjacency matrix $A = [w_{ij}] $.  Accordingly, the degree matrix $D$ of the weighted graph is defined as $D_{i,i} = \sum_{ j \text{ s.t. } (i,j) \in \mathcal{E}} w_{ij}$.
Equation \eqref{eq_svd_missing} still holds with these definitions, thus our spectral method extends to \emph{weighted} motion synchronization.


\subsection{Dealing with Noise}
\label{sec_noise}

We now consider the case where the pairwise motions are corrupted by noise, hence they do not satisfy equations \eqref{eq_compatibility} and \eqref{eq_svd_missing} exactly. 
Thus the goal is to recover the absolute motions such that they are ``maximally compatible'' with the available relative information.
In order to address this motion synchronization problem, we consider an algebraic cost function that measures the residuals (in the Frobenius norm sense) of Equation \eqref{eq_svd_missing}, namely
\begin{equation}
\min_{M \in SE(3)^n} \left\| \widehat{\Lap}M \right\|_F^2
\label{eq_min}
\end{equation}
with the additional constraint $\left\| \mathbf{m_4} \right\|_F = c $ in order to fix the global scale. Here $\mathbf{m_4}$ denotes the fourth column of $M$,
$\widehat{X}$ denotes a noisy version of the ideal matrix $X$, which contains the measured relative motions $\widehat{M}_{ij} \in SE(3)$, and 
$\widehat{\Lap} = ((D-A)\otimes \vone_{4 \times 4})\circ \widehat{X}$.
Hereafter we will consistently use the hat accent to denote noisy measurements.
Such a problem is difficult to solve since the feasible set $SE(3)^n = SE(3) \times \dots \times SE(3)$ is non-convex.

In order to make the computation tractable, we do not solve Problem \eqref{eq_min} directly, but we proceed as follows. 
First, we look for an orthogonal basis for the (approximated) 4-dimensional null-space of $\widehat{\Lap}$, by solving 
the following optimization problem
\begin{equation}
\min_{U^{\mathsf{T}}U = n I_4} \left\| \widehat{\Lap}U \right\|_F^2.
\label{eq_relaxed}
\end{equation}
In other words, we solve the homogeneous system of equations $\widehat{\Lap}U = \mathbf{0}$ in the least-squares sense, where the solution space is known to have approximately dimension $4$.

Then, we find an estimate for $M$ within this space by forcing the solution to coincide with $\quarta$ in every fourth row. Finally, we project in $SO(3)$ all the $3 \times 3$ blocks corresponding to rotations by using Singular Value Decomposition (SVD).

\begin{proposition}
\label{prop_eig}
Problem \eqref{eq_relaxed} admits a closed-form solution, which is given by the 4 eigenvectors of $\widehat{\Lap}^{\mathsf{T}} \widehat{\Lap}$ associated to the 4 smallest eigenvalues.
\end{proposition}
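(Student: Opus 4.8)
The plan is to recast Problem~\eqref{eq_relaxed} as a trace minimization under orthogonality constraints and then solve it via Lagrange multipliers. First I would rewrite the objective through the identity $\| \widehat{\Lap} U \|_F^2 = \trace(U^{\mathsf{T}} \widehat{\Lap}^{\mathsf{T}} \widehat{\Lap} U)$ and abbreviate $B = \widehat{\Lap}^{\mathsf{T}} \widehat{\Lap}$, which is symmetric and positive semidefinite of size $4n \times 4n$. Problem~\eqref{eq_relaxed} then becomes the minimization of $\trace(U^{\mathsf{T}} B U)$ subject to $U^{\mathsf{T}} U = n I_4$.

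Next I would introduce a symmetric multiplier matrix $\Lambda \in \mathbb{R}^{4 \times 4}$ and form the Lagrangian
\[
\mathcal{L}(U,\Lambda) = \trace(U^{\mathsf{T}} B U) - \trace\!\big(\Lambda\,(U^{\mathsf{T}} U - n I_4)\big).
\]
Because $B$ and $\Lambda$ are symmetric, differentiating with respect to $U$ and setting the result to zero gives the stationarity condition $B U = U \Lambda$. Diagonalizing $\Lambda = Q \Sigma Q^{\mathsf{T}}$ with $Q$ orthogonal and $\Sigma$ diagonal and substituting yields $B (U Q) = (U Q) \Sigma$, so that the columns of $U Q$ are orthogonal eigenvectors of $B$ (each of norm $\sqrt{n}$, since the constraint is preserved by $Q$) and the entries of $\Sigma$ are the associated eigenvalues. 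Hence at every stationary point the column space of $U$ is spanned by four eigenvectors of $B$.

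To single out the minimizer, I would evaluate the objective at a stationary point: from $B U = U \Lambda$ together with $U^{\mathsf{T}} U = n I_4$ one gets $U^{\mathsf{T}} B U = n \Lambda$, whence $\trace(U^{\mathsf{T}} B U) = n\,\trace(\Lambda) = n\,\trace(\Sigma)$, i.e. $n$ times the sum of the four eigenvalues of $B$ picked out by $U$. This is smallest exactly when those are the four smallest eigenvalues of $B$, so the optimal $U$ is assembled (up to scaling of the columns by $\sqrt{n}$ and an arbitrary orthogonal mixing $Q$) from the eigenvectors of $B = \widehat{\Lap}^{\mathsf{T}} \widehat{\Lap}$ associated to its four smallest eigenvalues, which is the claim.

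The step I expect to be the main obstacle is upgrading ``stationary point'' to ``global minimizer'': the Lagrangian calculation by itself only characterizes critical points and does not rule out that some non-eigenvector configuration does better, nor that a higher bundle of eigenvalues could somehow be optimal. I would close this gap either by invoking the Ky Fan variational principle, which states directly that $\min_{U^{\mathsf{T}} U = n I_4} \trace(U^{\mathsf{T}} B U)$ equals $n$ times the sum of the four smallest eigenvalues of $B$, or by comparing objective values across all admissible eigenvalue selections and checking that any other choice is strictly larger. Finally I would remark that the minimizer is not unique, since any orthogonal recombination $U Q$ of the four bottom eigenvectors is equally optimal; this ambiguity is harmless here because the post-processing that imposes the $\quarta$ structure on every fourth row selects one specific basis from this space.
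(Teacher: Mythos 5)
Your proposal is correct and follows essentially the same route as the paper: the rewriting $\| \widehat{\Lap} U \|_F^2 = \trace(U^{\mathsf{T}} \widehat{\Lap}^{\mathsf{T}} \widehat{\Lap} U)$, the symmetric Lagrange multiplier $\Lambda$, and the stationarity condition $\widehat{\Lap}^{\mathsf{T}} \widehat{\Lap} U = U \Lambda$ are exactly the paper's steps. If anything, you are more careful than the paper at the end: the paper only verifies that quadruples of eigenvectors are stationary points and then asserts that the minimum is attained at the four smallest eigenvalues, whereas you characterize \emph{all} stationary points by diagonalizing $\Lambda$, compute the objective value $n\,\trace(\Sigma)$ at each, and explicitly close the stationary-point-to-global-minimizer gap (via Ky Fan, or via compactness of the feasible set plus comparison of values), which is precisely the step the paper leaves implicit.
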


\begin{proof}
We first observe that Problem \eqref{eq_relaxed} coincides with
\begin{equation}
\min_{U^{\mathsf{T}}U = n I_4} \trace(U^{\mathsf{T}} (\widehat{\Lap}^{\mathsf{T}} \widehat{\Lap}) U).
\label{eq_trace}
\end{equation}
Let $\mathcal{F}$ be the unconstrained cost function corresponding to this problem, namely
\begin{equation}
\mathcal{F} (U) = \trace(U^{\mathsf{T}} (\widehat{\Lap}^{\mathsf{T}} \widehat{\Lap}) U) + \trace(\Lambda(U^{\mathsf{T}}U - n I_4 ))
\end{equation}
where $\Lambda \in \mathbb{R}^{4 \times 4}$ is a symmetric matrix of unknown Lagrange multipliers.
Setting to zero the partial derivatives of $\mathcal{F}$ with respect to $U$ we obtain
\begin{equation}
\frac{ \partial \mathcal{F} }{ \partial U } = 2  (\widehat{\Lap}^{\mathsf{T}} \widehat{\Lap}) U + 2 U \Lambda = 0 \Rightarrow  
(\widehat{\Lap}^{\mathsf{T}} \widehat{\Lap}) U = - U \Lambda.
\label{eq_derivative}
\end{equation}
Let $\mathbf{u_i}$ be any four eigenvectors of $\widehat{\Lap}^{\mathsf{T}} \widehat{\Lap}$ 
(normalized so that $\left\| \mathbf{u_i} \right\| = \sqrt{n}$) and let $\lambda_i$ be the corresponding eigenvalues.
Then $U = [\mathbf{u_1} | \mathbf{u_2} | \mathbf{u_3} | \mathbf{u_4} ]$ satisfies both \eqref{eq_derivative} and the constraint 
$U^{\mathsf{T}}U = n I_4$, with $\Lambda = - \diag(\lambda_1, \lambda_2, \lambda_3, \lambda_4)$ (indeed $\widehat{\Lap}^{\mathsf{T}} \widehat{\Lap}$ admits an orthonormal basis of real eigenvectors since it is symmetric). In other words, any quadruple of eigenvectors is a stationary point for the objective function $\mathcal{F}$. The minimum is attained in \eqref{eq_trace} if $\mathbf{u_i}$ are the 4 least eigenvectors of $\widehat{\Lap}^{\mathsf{T}} \widehat{\Lap}$.
\end{proof}

Proposition \ref{prop_eig} guarantees that the solution to problem \eqref{eq_relaxed} is given by the 4 least eigenvectors of $\widehat{\Lap}^{\mathsf{T}} \widehat{\Lap}$, which coincide with the 4 least right singular vectors in the Singular Value Decomposition (SVD) of $\widehat{\Lap}$.
Such a solution represents the best (in the Frobenius norm sense) 4-dimensional approximation for $\nullsp(\widehat{\Lap})$.
Within such a space, we find the solution that is closest to have
every fourth row equal to $\quarta$  by solving system \eqref{eq_ls} in the least-squares sense. Then, such a solution is projected onto $SE(3)^n$ -- as in \cite{belta2002euclidean} -- by forcing  every fourth row  to $\quarta$ and projecting $3 \times 3$ rotation  blocks onto $SO(3)$ through 
SVD.


This technique has the advantage of being extremely fast, as motion synchronization is cast to eigenvalue decomposition of a $4n \times 4n$ matrix.
Moreover, in practical application the measurement graph $\mathcal{G}$ is sparse, thus employing sparse eigen-solvers (such as {\sc Matlab} \texttt{eigs}) increases its efficiency. 
From the computational complexity point of view, the Lanczos method  (implemented by \texttt{eigs})  is ``nearly linear"
since every iteration is linear
in $n$, if the matrix is sparse, but the number of iterations is not
constant.


\subsection{Dealing with Outliers}
\label{sec_outliers}

The fact that our spectral method copes easily with weights on individual
relative motions allows a straightforward extension to gain resilience to
rogue input measures via Iteratively Reweighted Least Squares (IRLS).

First, we solve \eqref{eq_relaxed} to obtain an estimate for $M$ with given
weights\footnote{The initial weights are all 1 by default, but they can be
  initialized from any reliability information coming from the relative motion
  estimation procedure.} as explained in the previous section, then we update the weights using the current
estimate of absolute motions, and these steps are iterated until convergence.
In our experiments we used the Cauchy weight function \cite{Holland77}
\begin{equation}
w_{ij} = \genfrac{}{}{}{}{1}{ \raisebox{-5pt}{$1+\left(\frac{r_{ij}}{c}\right)^2$}}
\end{equation}  
where 
$r_{ij} = \| \widehat{M}_{ij} - M_i M_j^{-1} \|_F$.  
The tuning constant $c$ have been chosen, as customary, based on the median absolute deviation (MAD):  $c = 1.482 \; \theta
\med (\abs{\mathbf{r} - \med(\mathbf{r}) }) $, where $\med()$ is the median operator, $\mathbf{r}$ is the vectorization of the residuals $r_{ij}$, and  $\theta=2$.

\subsection{Generalization to SE(N)}
\label{sec_SEN}

In this paper we focus on $SE(3)$ because this group arises in several applications.
However, it is straightforward to see that our analysis and the derived spectral method apply equally well to any dimension. 

Suppose that we are given a redundant number of pairwise ratios $M_{ij} \in SE(N)$, and we want to estimate the associated group elements $M_i\in SE(N)$, which represent rigid displacements in $\mathbb{R}^N$.
If the graph is complete then -- in the absence of noise -- the block-matrix 
$X \in \mathbb{R}^{(N+1)n \times (N+1)n}$ has rank $N+1$, and the columns of $M$ are $N+1$ eigenvectors of $X$ with eigenvalue $n$.
If the graph is not complete then Equation \eqref{eq_svd_missing} still hold, and hence the columns of $M$
form a basis for the $(N+1)$-dimensional null-space of $L$.
Thus we can generalize our spectral method to synchronize elements of $SE(N)$, by computing the $N+1$ least eigenvectors of 
$\widehat{\Lap}^{\mathsf{T}} \widehat{\Lap}$.

\section{Experiments}


In this section we evaluate our spectral method -- henceforth called \namealgo \ -- on both simulated and real data in terms of accuracy, execution cost and robustness to outliers. 
We compare \namealgo to several techniques from the state-of-the-art.
All the experiments are performed in {\sc Matlab} on a MacBook Air with i5 dual-core @ 1.3 GHz.
In order to compare estimated and ground-truth absolute motions, we find the optimal transformation that aligns them by applying single averaging \cite{Hartley11} for the rotation term and least-squares for the scale and translation. We use the angular distance and Euclidean norm to measure the accuracy of absolute rotations and translations respectively.

\subsection{Simulated Data}



\begin{figure*}[!htbp]
\centering
\includegraphics[width=0.33\linewidth]{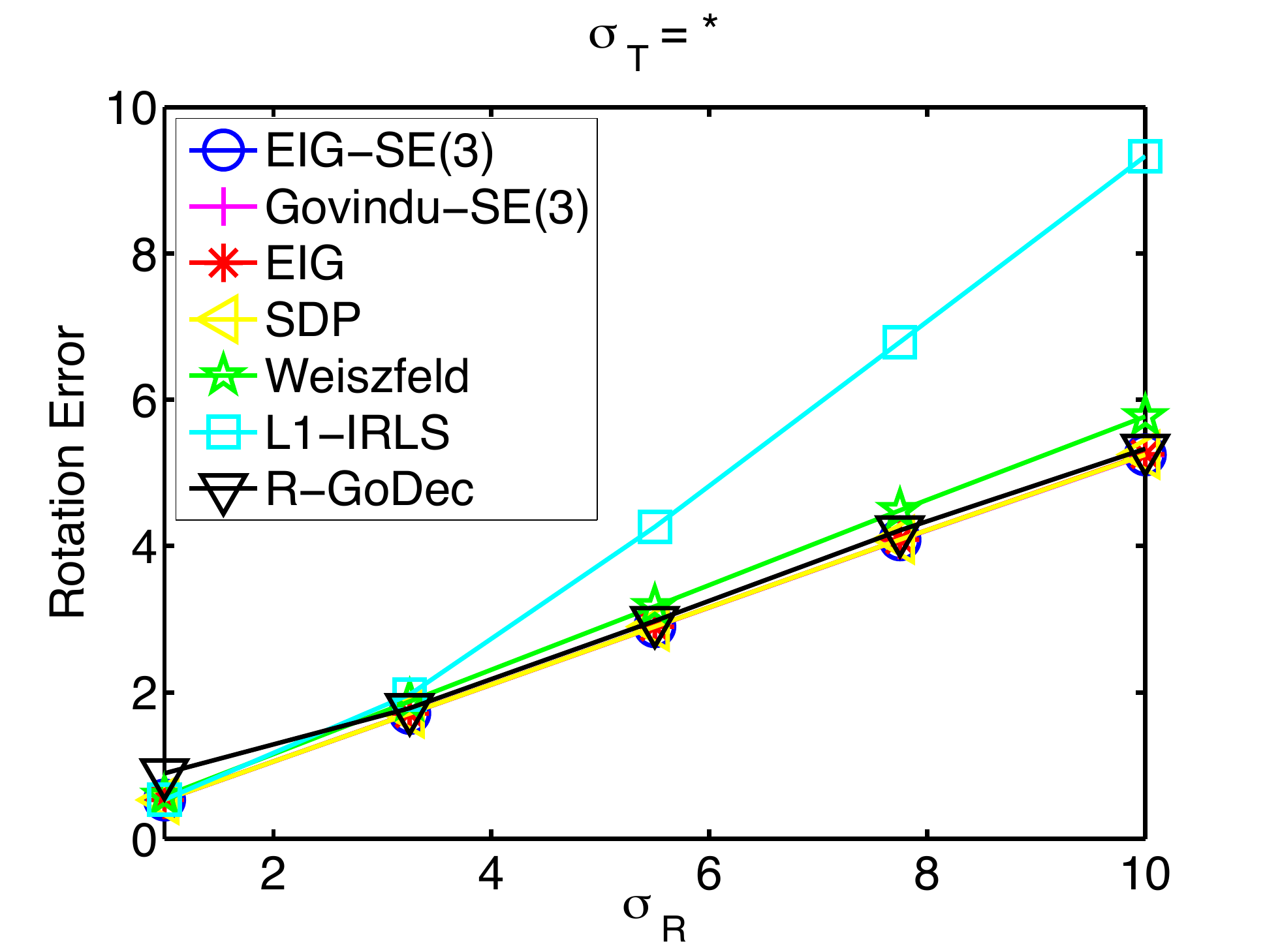} \quad
\includegraphics[width=0.33\linewidth]{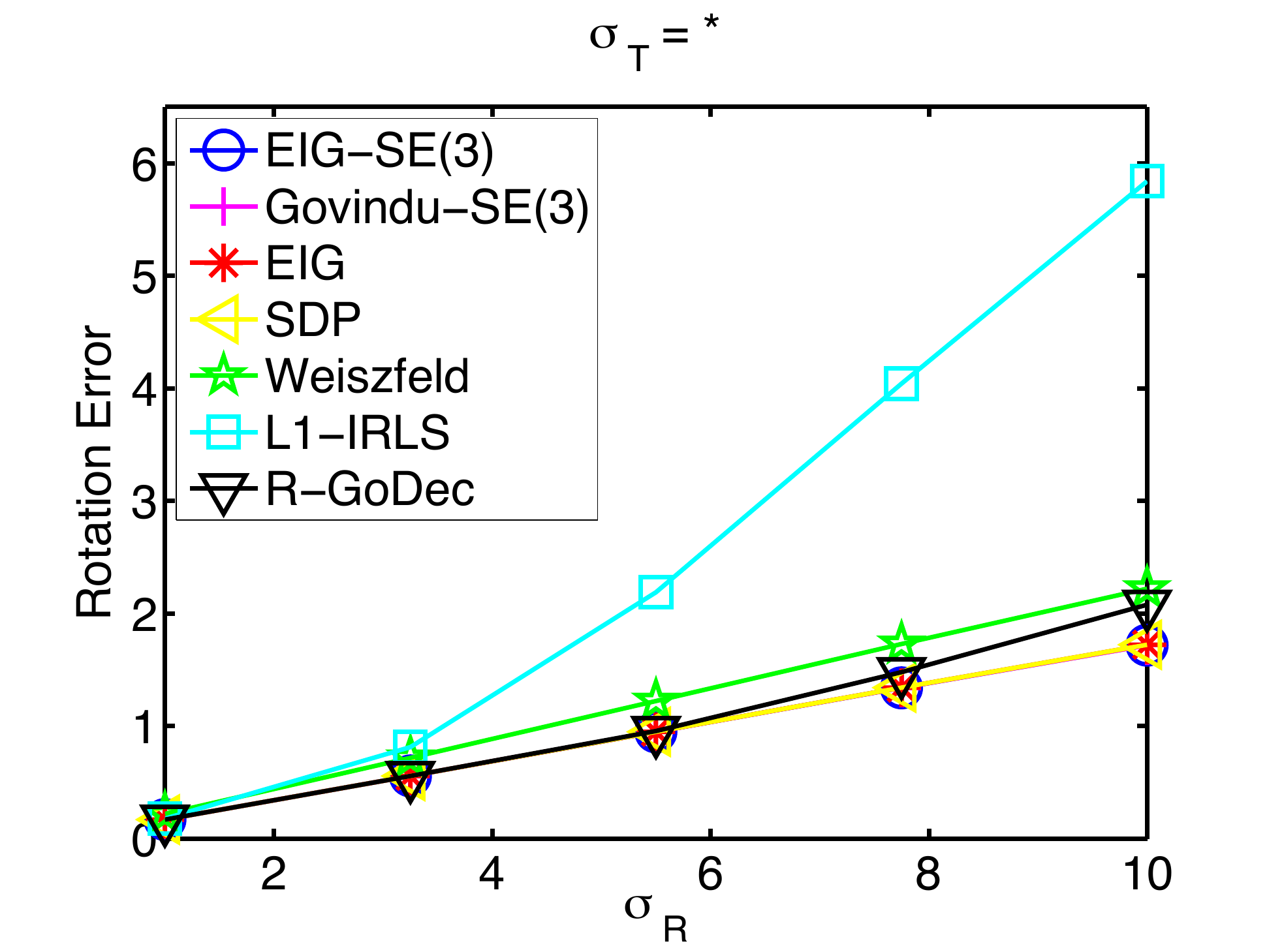}  
\caption{Mean angular errors (degrees) on the absolute rotations with $d=5$ (left) and $d=30$ (right). The value of $\sigma_T$ is meaningless.}
\label{exp_noiseR}
\end{figure*}

\begin{figure*}[!htbp]
\centering
\includegraphics[width=0.33\linewidth]{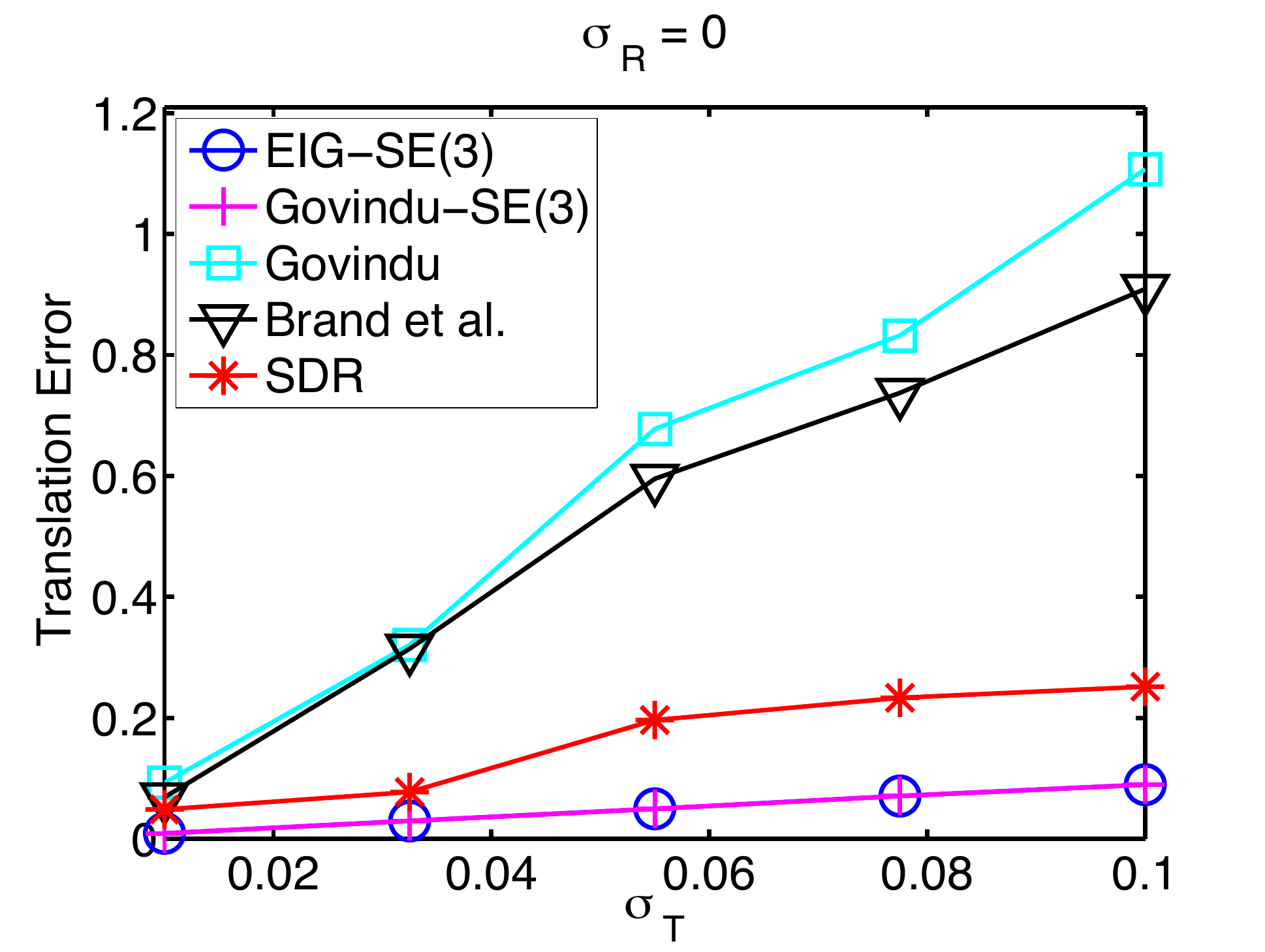} \quad
\includegraphics[width=0.33\linewidth]{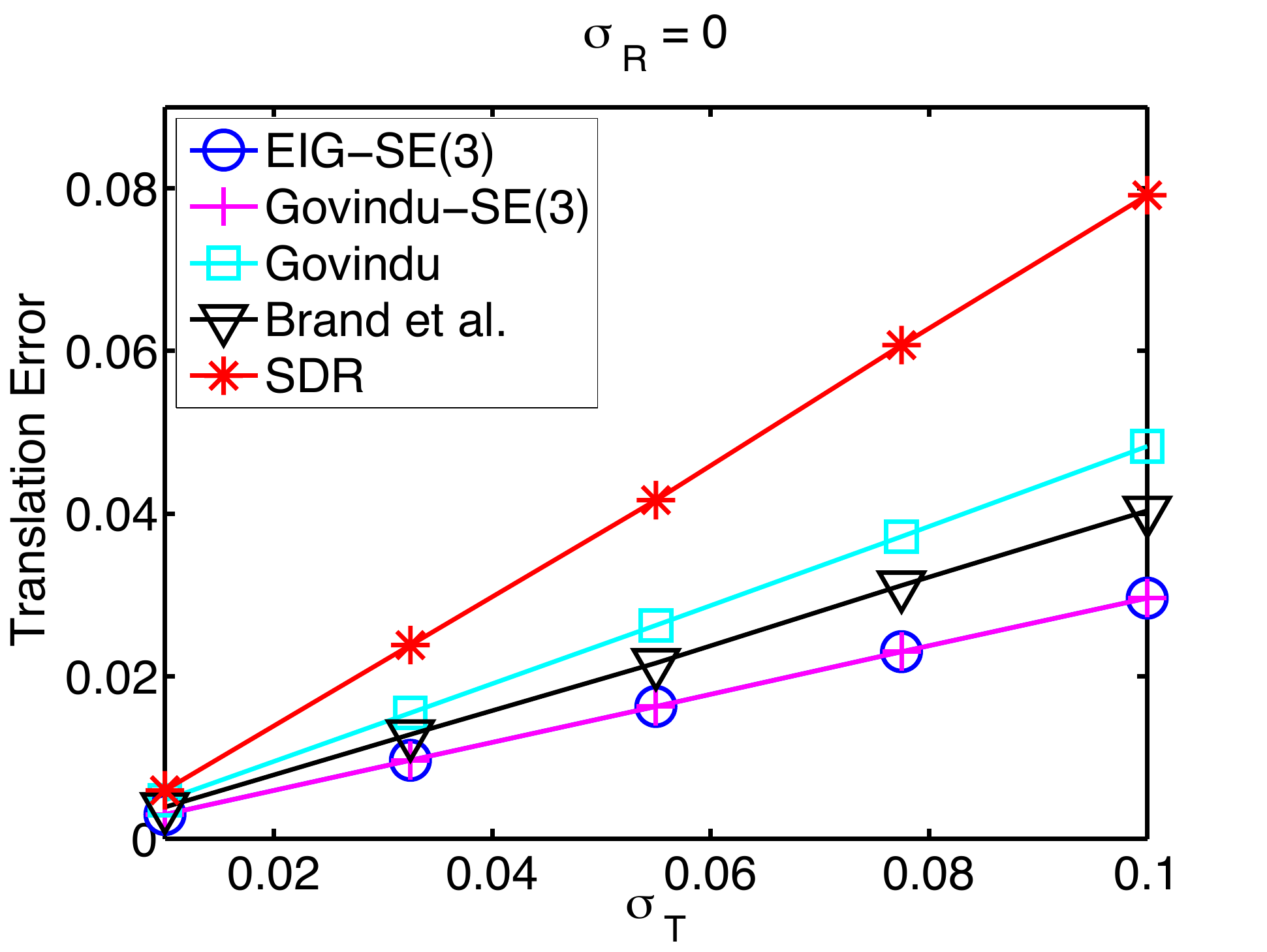}  
\caption{Mean errors on the absolute translations with $d=5$ (left) and $d=30$ (right).}
\label{exp_noiseT}
\end{figure*}

In these experiments  we consider $n$ absolute motions in which rotations are sampled from random Euler angles and translation components follow a standard Gaussian distribution.
The level of sparsity of the measurement graph is defined through the average degree $d$ of nodes. The available pairwise motions are corrupted by a multiplicative noise, where the rotation component has axis uniformly distributed over the unit sphere and angle following a Gaussian distribution with zero mean and standard deviation $\sigma_R \in [1^{\circ},10^{\circ}]$, and  the translation components are sampled from a Gaussian distribution with zero mean and standard deviation 
$\sigma_T \in [0.01,0.1]$.  
In this way we perturb both direction and magnitude of pairwise translations.
All the results are averaged over $50$ trials.


We evaluate the effect of noise on rotations and translations both separately and together, by considering $n=100$ absolute motions, in the cases $d=5$ and $d=30$, which correspond to about $95\%$ and $70\%$ of missing pairs, respectively. Higher values of $d$ correspond to better conditioned problems, with the same qualitative behaviour as $d=30$. Please note that in the real cases reported in Tab.~\ref{tab_large}, the percentage of missing pairs ranges from 30\% to 90\%.


\paragraph{Rotation synchronization.}

As for rotations, besides \govse  \cite{Govindu04}, we consider general synchronization techniques such as the Weiszfeld algorithm \cite{Hartley11}, spectral relaxation \cite{Arie12} (EIG), semidefinite programming \cite{Arie12} (SDP),
the L1-IRLS algorithm \cite{Chatterjee_2013_ICCV}, and the \textsc{R-GoDec} algorithm \cite{RGodec}.
Methods based on quaternions (such as \cite{Govindu01}) have been already proved inferior to the other methods in \cite{MartinecP07}.
The code of L1-IRLS is available on-line, while in the other cases we used our implementation.

Figure \ref{exp_noiseR} reports the mean angular errors on the absolute rotations as a function of $\sigma_R$, obtained by running the rotation synchronization techniques mentioned above.
The best accuracy is obtained by \namealgo together with EIG, SDP and \govse .
On the contrary, the robust approaches \textsc{R-GoDec}, L1-IRLS  and Weiszfeld yield worse results, to different extents, because they inherently trade robustness for statistical efficiency.

The noise on relative translations does not have any influence on absolute rotations, hence the value of $\sigma_T$ is meaningless in this experiment.

\paragraph{Translation synchronization.}

As for translations, we consider only methods working in frame space, i.e. not requiring  point  correspondences, such as  SDR \cite{Ozyesil2013stable}, the graph-embedding approach by Brand et al.~\cite{Brand04} and the works of Govindu \cite{Govindu01,Govindu04}. 
Among these methods, only \namealgo and \govse  \cite{Govindu04}  are influenced by the noise on the translation norms, for they work in $SE(3)$, while this does not influence the remaining algorithms, which take as input relative translation directions.
The code of SDR is available on-line, while in the other cases we use our implementation.
In this simulation we do not perturb the relative rotations ($\sigma_R=0$), thus all the methods are given ground-truth relative/absolute rotations.
Noise on rotational component influences also the translation errors with results qualitatively similar to those reported here.

Figure \ref{exp_noiseT} shows the mean errors on the absolute translations as
a function of $\sigma_T$ (units are commensurate with the simulated data),
obtained by running the techniques mentioned above. 
Both \namealgo and \govse  outperform all the analysed methods in terms of accuracy.

\begin{figure*}[!htbp]
\centering
\includegraphics[width=0.33\linewidth]{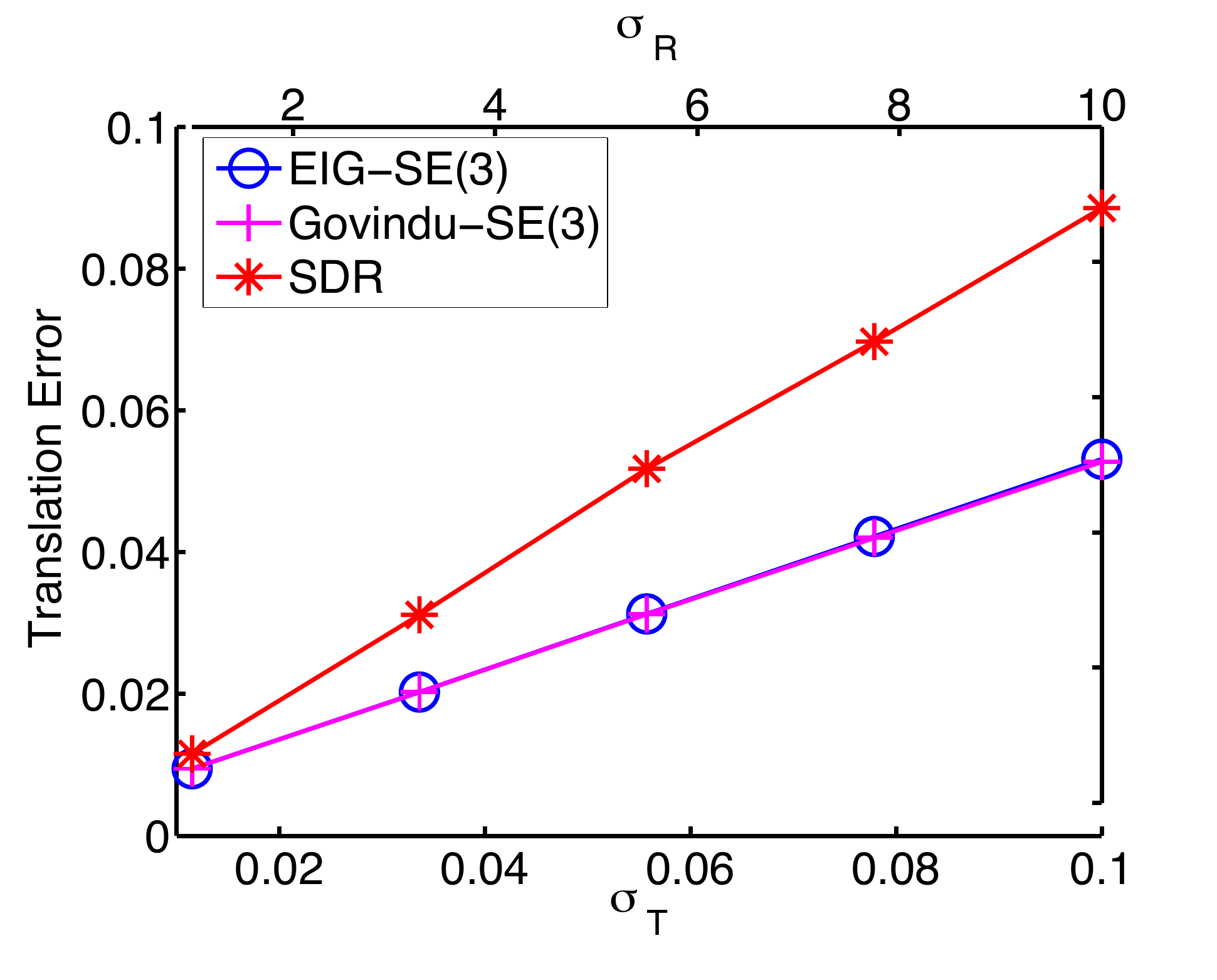} \quad
\includegraphics[width=0.33\linewidth]{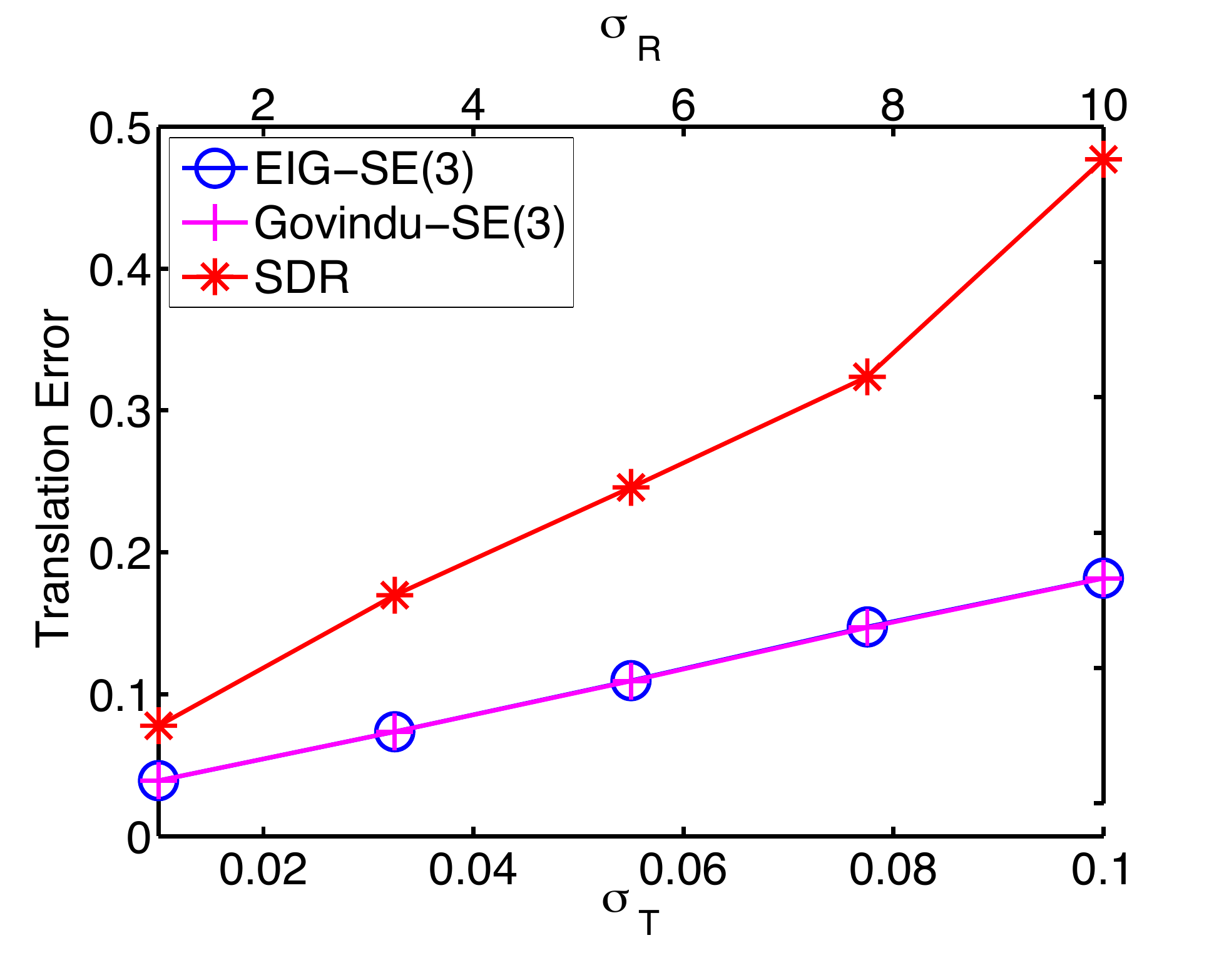}  
\caption{Mean errors on the absolute translations with noise on both rotations and translations, with $d=5$ (left) and $d=30$ (right).}
\label{exp_noiseRT}
\end{figure*}

When the measurement graph is extremely sparse ($d=5$) the methods by Govindu \cite{Govindu01} and Brand et al.~\cite{Brand04} yield larger errors than usual; by inspecting the solution it is found that this corresponds to wrong solutions concentrated around a few locations. 
This can be visualized in Fig.~\ref{exp:clustering}, which shows ground-truth and estimated positions (after alignment) for a single trial when $\sigma_T=0.08$.
Such a behaviour -- which is called  ``clustering phenomenon''   -- is analysed in \cite{Ozyesil2013stable} where the cause has been traced back to a lack of constraints on the location distances. For this reason the authors of \cite{Ozyesil2013stable} introduce ad-hoc constraints in the minimization problem, forcing the differences between locations to be ``sufficiently'' large.
On the contrary,  \namealgo and \govse , by working in $SE(3)$,  \emph{implicitly} enforce such constraints as they 
take in input the relative translations with their norm.

\begin{figure}[!htbp]
\centering
\includegraphics[width=1\linewidth]{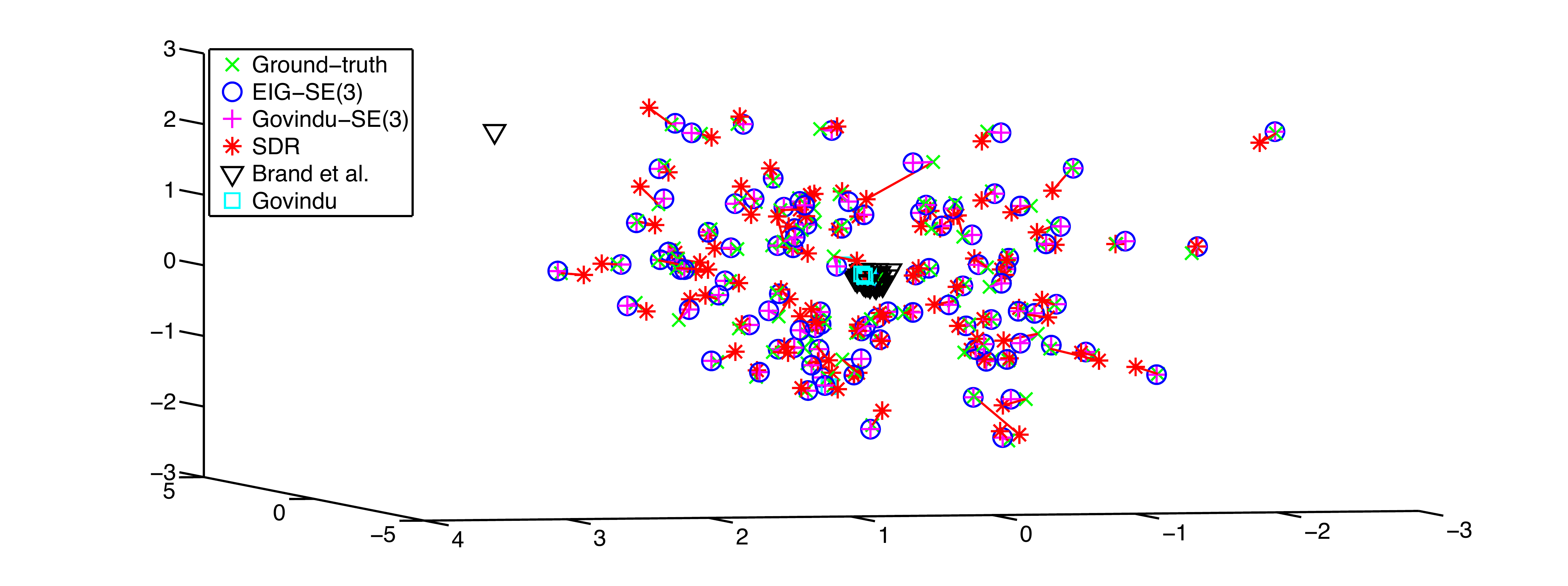} 
\caption{Clustering phenomenon. By enlarging the figure, the reader will distinguish one cluster of black triangles ({\footnotesize \color{black} $\triangledown$})  and one of cyan squares ({\footnotesize \color{cyan} $\square$}) near the origin, which correspond to the locations obtained by Brand et al.~and Govindu respectively.}
\label{exp:clustering}
\end{figure}

 \vspace{-4mm}
\paragraph{Motion synchronization.}

In this experiment we consider all the pipelines that cope with both rotation and translation synchronization, and  work in frame space, namely SDR \cite{Ozyesil2013stable} and \govse  \cite{Govindu04}.  
SDR has an original translation stage while 
rotations are computed by iterating the EIG method. Our approach and \govse  recover  both rotations and translations at the same time.

Figure \ref{exp_noiseRT} reports the mean errors on the absolute translations obtained after perturbing both relative rotations and translations. 
All the methods return good estimates, which are further improved by increasing edge connectivity, and \namealgo together with \govse  achieves the lowest errors.

We also analysed the execution time of motion synchronization, by varying the number of absolute motions from $n=100$ to $n=1000$, all the others parameters being fixed. 
More precisely, we choose the values $d=10$, $\sigma_T=0.05$ and $\sigma_R=5^{\circ}$ to define sparsity and noise.
Figure \ref{exp_time} reports the running times of the analysed algorithms as a function of the number of nodes in the measurements graph, showing that \namealgo is remarkably faster than \govse  and SDR.
Indeed SDR solves a semidefinite programming problem and \govse  uses an iterative approach in which absolute motions are updated by performing multiple averaging in the tangent space;
both these operations are more expensive than computing the least four eigenvectors of a $4n \times 4n$ matrix.

\begin{figure}[!htbp]
\centering
\includegraphics[width=0.495\linewidth]{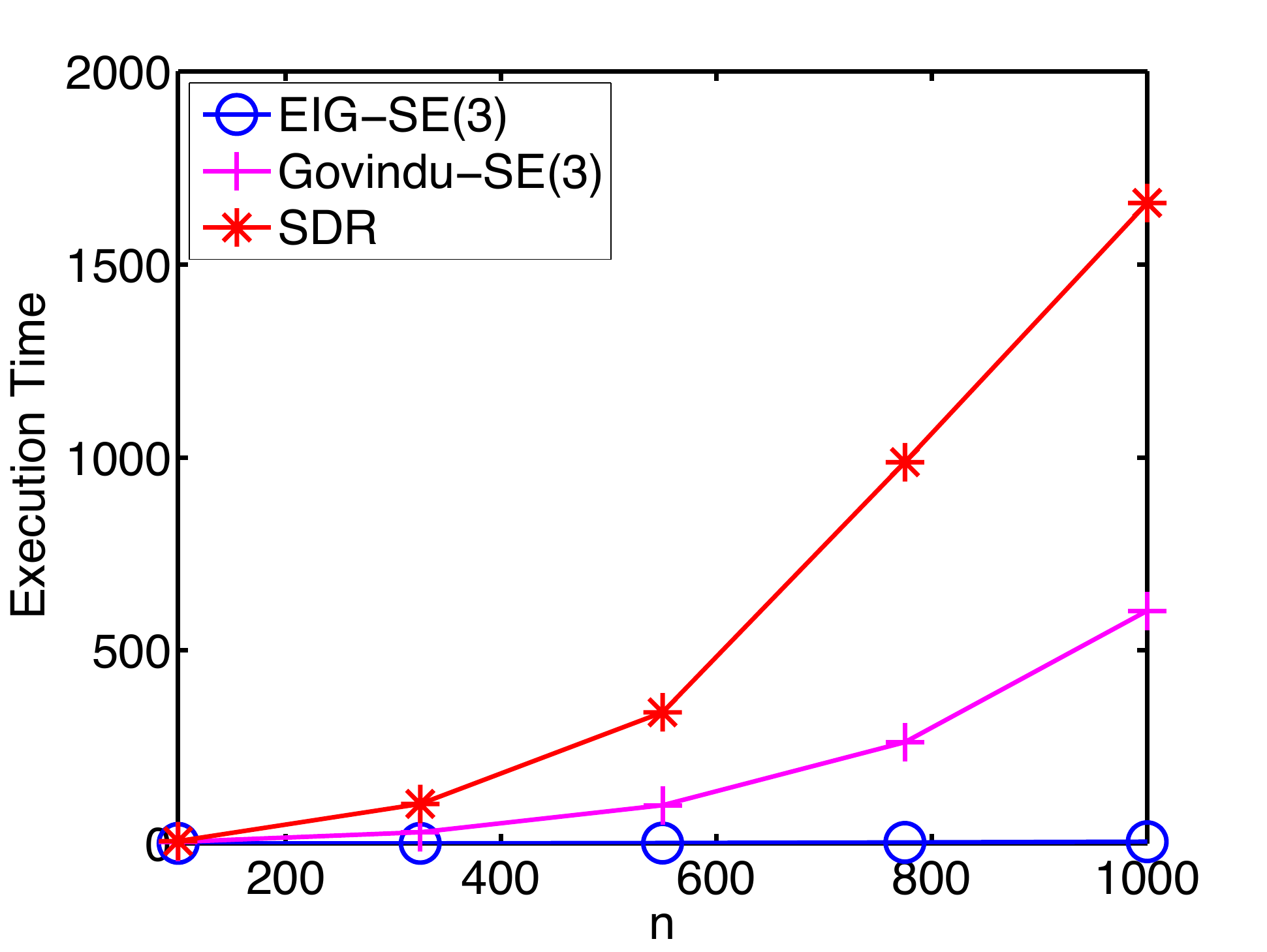} 
\includegraphics[width=0.495\linewidth]{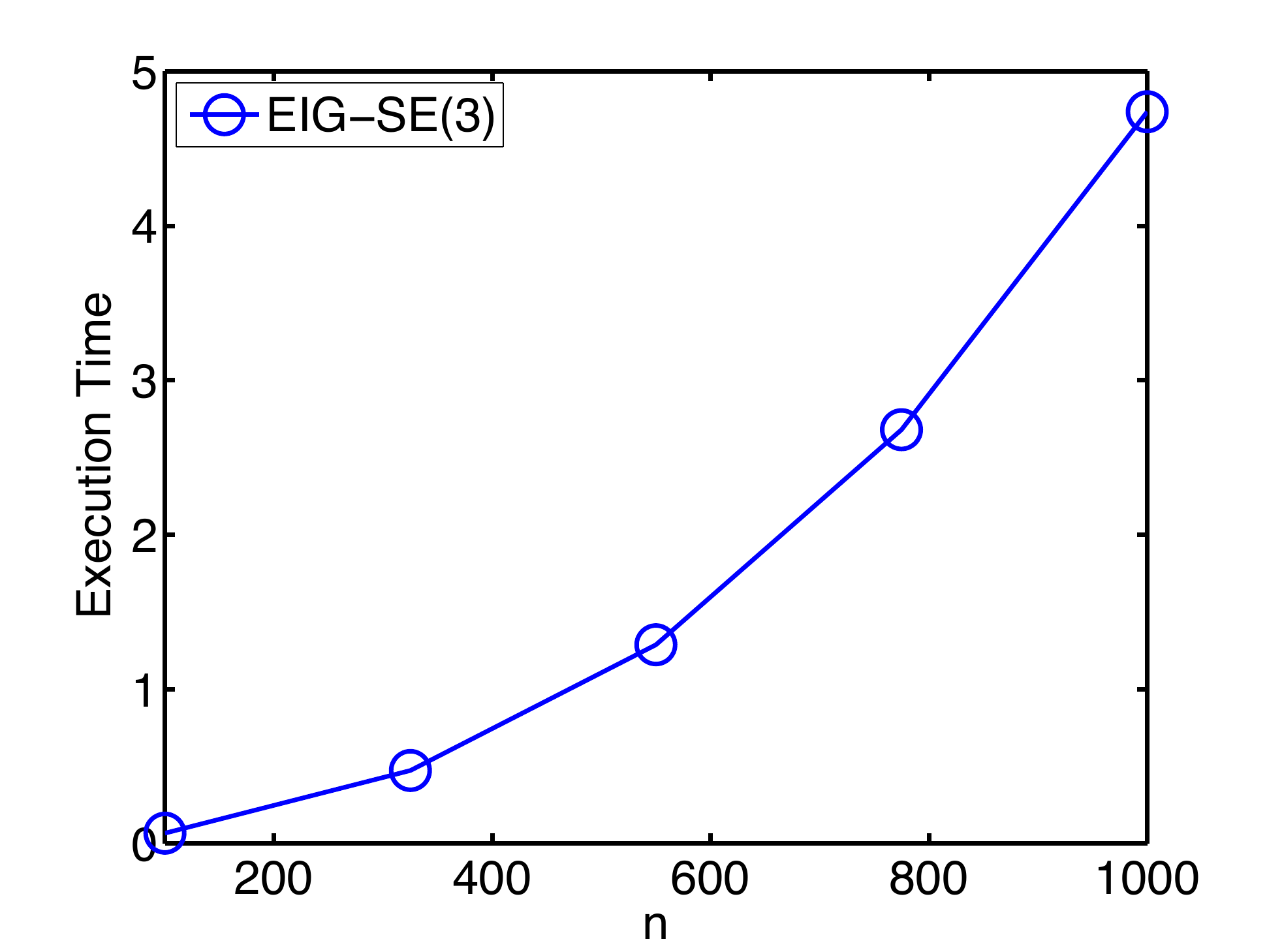}
\caption{Execution times (seconds) of motion synchronization. A magnification is shown on the right to appreciate the timing of \namealgo.}
\label{exp_time}
\end{figure}

The rundown of these experiments is that, \textbf{\namealgo achieves the same optimal accuracy of its closest competitor \cite{Govindu04} in considerably less time}.

\paragraph{Outliers influence.}

In this experiment we study the resilience to outliers of \namealgo with IRLS.
We consider $n=100$ absolute motions sampled as before and we fix $d=30$ to define sparsity.
Since we are interested in analysing exact recovery in the presence of outliers, noise is not introduced in this simulation.
The fraction of wrong relative motions -- randomly generated -- varies from $10 \%$ to $50 \%$.
Figure \ref{exp_IRLS} reports the mean errors obtained by \namealgo and its IRLS modification: the empirical breakdown point of \namealgo + IRLS is about $45\%$.

\begin{figure}[!htbp]
\centering
\includegraphics[width=0.495\linewidth]{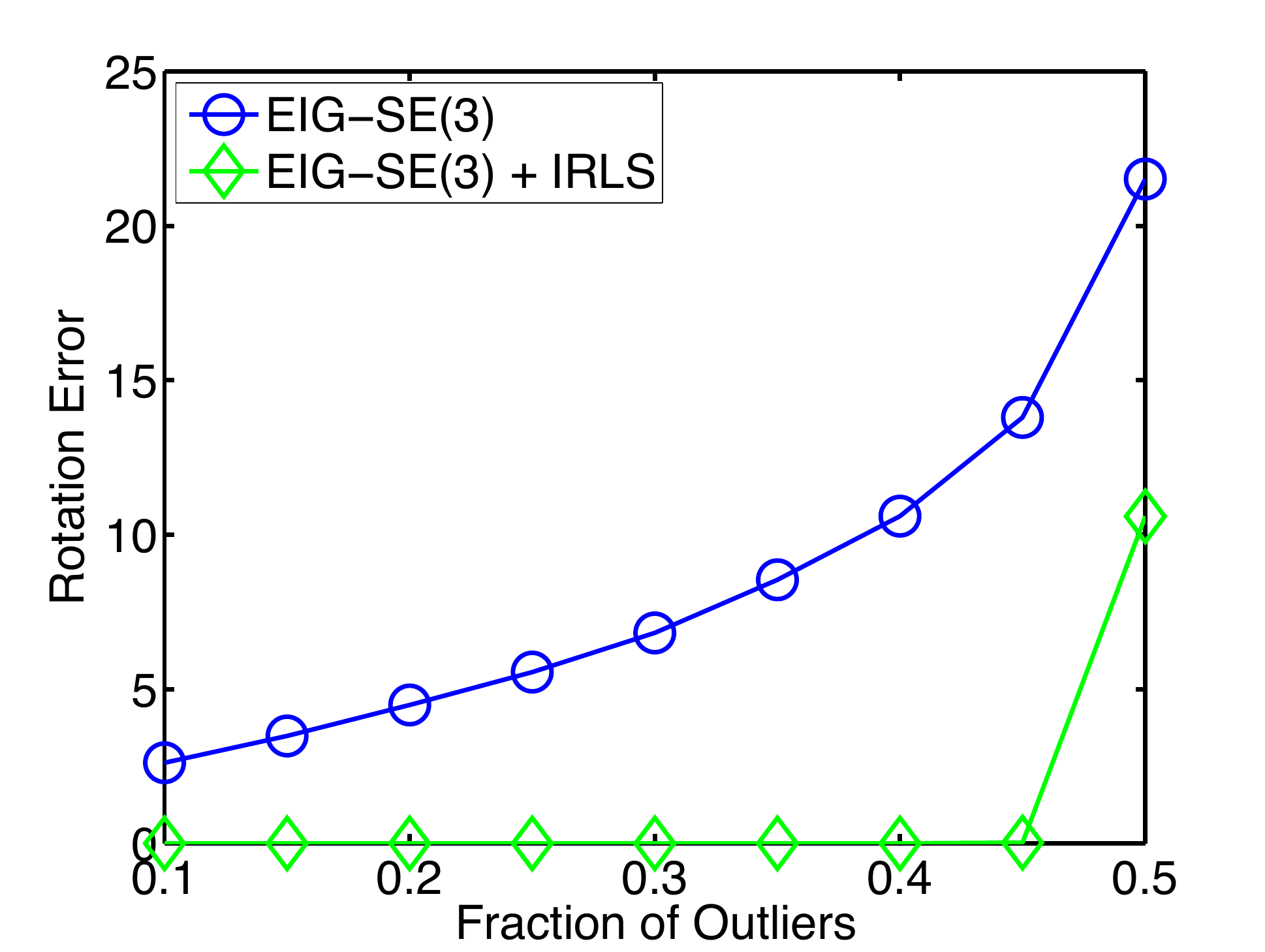} 
\includegraphics[width=0.495\linewidth]{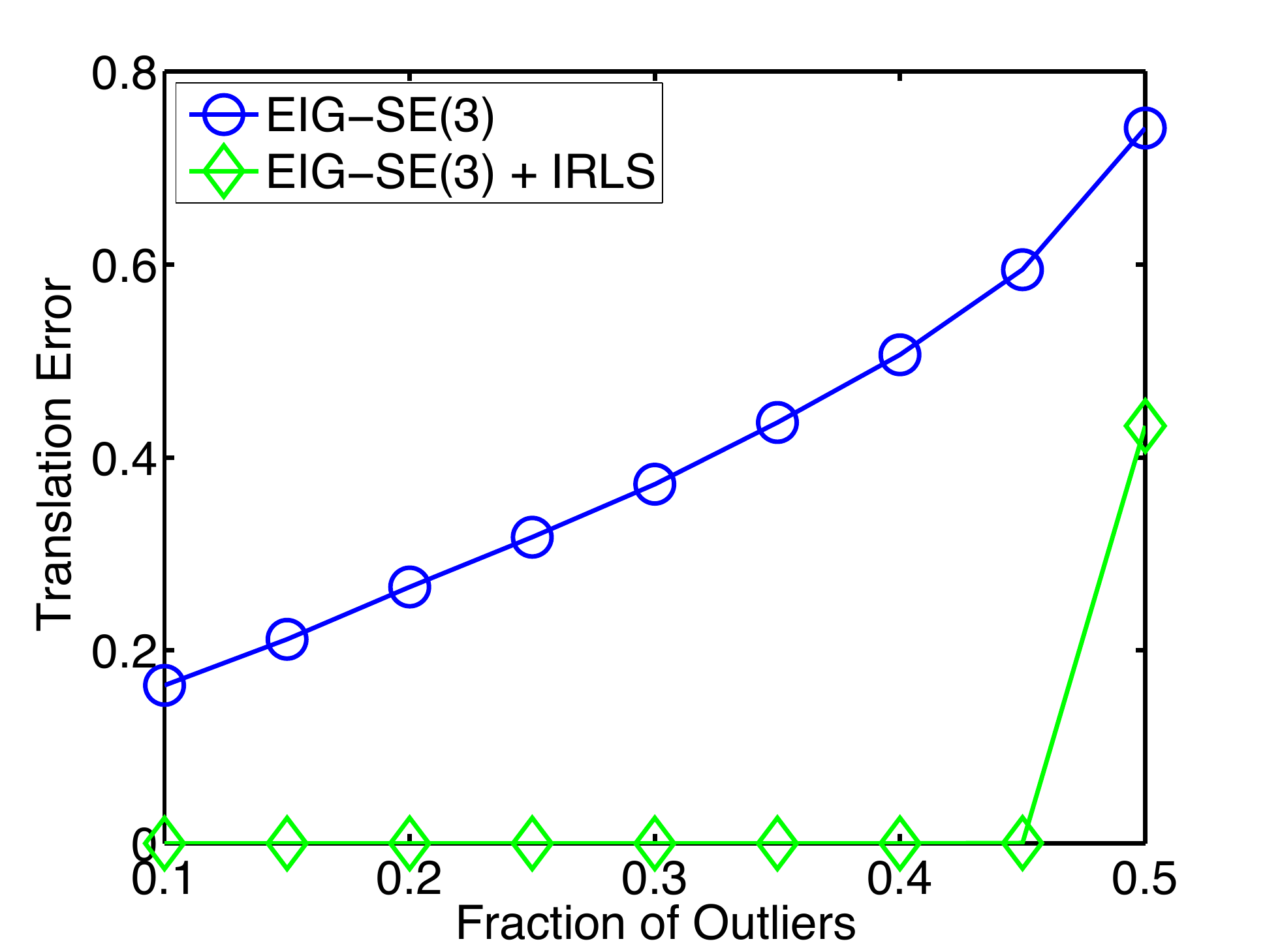}
\caption{Mean errors on the absolute motions versus outliers contamination.}
\label{exp_IRLS}
\end{figure}

\subsection{Real Data}

We apply \namealgo with IRLS to the structure from motion problem, considering both the EPFL
benchmark \cite{real_data} and unstructured, large-scale image sequences from \cite{Snavely14}. The latter are available on-line together with the relative motions, while for the EPFL
benchmark we computed them following a standard approach based on the essential matrix with a final bundle adjustment (BA) refinement of camera pairs. 


Owing to the depth-speed ambiguity, the magnitude of relative translations (also referred to as \emph{epipolar scales})  are undefined.
Therefore, the input relative motions do not fully specify elements of $SE(3)$, and the unknown scales have to be computed.


A straightforward approach (suggested in \cite{Govindu04}) consists in iteratively updating these epipolar scales, i.e. during each iteration the scale of the translation of $\widehat{M}_{ij}$ is set equal to that of $M_i M_j^{-1}$, where
$M_i$ and $M_j$ are the current estimates of camera motions.
The starting scales are all equal to 1 and the procedure is iterated until convergence. In our implementation this is combined with IRLS in the same loop: in one step we update the IRLS weights and in the next step we update the epipolar scales.

A different approach is proposed in \cite{2015arXiv150303637A}, where a two-stage method is developed for computing the epipolar scales based on the knowledge of two-view geometries only. 
First, a Minimum Cycle Basis (MCB) for the measurement graph is extracted by using Horton's Algorithm \cite{Horton}, then all the scales  are recovered simultaneously by solving a homogeneous linear system. 
This approach is based on the observation that the compatibility constraints associated to these cycles can be seen as equations in the unknown scales. In this way all the epipolar scales are computed before performing motion synchronization.

However, computing the epipolar scales is not 
part of the synchronization task, strictly speaking. As a matter of fact, this indeterminacy is an idiosyncrasy of the structure from motion problem, which is not shared, e.g., by the 
multiple point-set registration problem, where the relative motions are fully specified.
For this reason we are agnostic about the specific method for computing the scales, and 
we also  provide results obtained by using ground-truth scales, in addition to the approaches mentioned above.



\paragraph{EPFL Benchmark.}

The EPFL Benchmark datasets \cite{real_data}  contain from $8$ to $30$ images, and provide ground-truth absolute motions.

Results are reported in Tab.~\ref{tab_rotations} and Tab.~\ref{tab_translations}, which show the mean errors of motion synchronization before and after applying a two-step bundle adjustment,
as done in \cite{moulon13}, where in the first step rotations are kept fixed.
 
We consider three versions of \namealgo, which differ for the technique chosen to recover the epipolar scales, namely using ground-truth scales (GT), computing scales through \cite{2015arXiv150303637A} (MCB), and updating scales iteratively (Iter).
Our spectral solution is compared with the global SfM pipeline described by Moulon et al.~\cite{moulon13} and by  Ozyesil et al.~\cite{Ozyesil2013stable}. 
We also consider the pipeline obtained by combining the rotation synchronization technique in \cite{RGodec} with the translation synchronization method in \cite{Brand04}.
As a reference, we included in the comparison the sequential SfM pipeline \textsc{Bundler} \cite{Snavely06}.

With the exception of Moulon et al.~and \textsc{Bundler}, for which results are taken from \cite{moulon13}, all the other methods are given the same relative motions as inputs.

Both \namealgo and all the analysed techniques achieve a high precision, obtaining an average rotation error less than 0.1 degrees and an average translation error of the order of millimetres, after the final BA.  Our method is able to recover camera parameters efficiently, since the motion synchronization step takes about 1s for the largest sequences.

If we concentrate on the \namealgo-GT columns, we can see that it  achieves the optimum \emph{before} BA in most datasets,  confirming the effectiveness of our method for synchronizing relative motions, when the latter are fully specified.  Without ground-truth scales, good estimates of motion parameters are still obtained, and precision increases by using MCB rather than the iterative approach.
The error after BA is always very small and almost equal to the other methods, confirming that \namealgo  provides a good starting point for bundle adjustment.

\begin{table*}[!htb] 
\centering
\ra{1.25}
\caption{Mean angular errors (degrees) on camera rotations for the EPFL benchmark. Moulon et al.~ is missing in this table because rotation errors are not reported in \cite{moulon13}.
\label{tab_rotations}}
\smallskip
\scalebox{0.72}{
\begin{tabular}{@{}lllllllllllllll@{}}
\toprule
     & \multicolumn{2}{c}{\namealgo-GT}  
   & & \multicolumn{2}{c}{\namealgo-Iter} 
   & & \multicolumn{2}{c}{\namealgo-MCB}
   & & \multicolumn{2}{c}{Ozyesil et al.} 
   & & \multicolumn{2}{c}{\small \textsc{R-GoDec}+Brand et al.}  \\
\cmidrule{2-3} \cmidrule{5-6} \cmidrule{8-9}
\cmidrule{11-12} \cmidrule{14-15} 
Dataset 
& pre BA & post BA  & 
& pre BA & post BA  &
& pre BA & post BA  & 
& pre BA & post BA  &
& pre BA & post BA  \\
\midrule
HerzJesuP8 	& 0.04 & 0.03 && 0.03 & 0.03 && 0.03 & 0.03 && 0.06 & 0.03 && 0.04 & 0.03 \\
HerzJesuP25 & 0.06 & 0.03 && 0.06 & 0.04 && 0.06 & 0.04 && 0.14 & 0.04 && 0.13 & 0.04 \\
FountainP11 & 0.03 & 0.03 && 0.03 & 0.04 && 0.04 & 0.03 && 0.03 & 0.03 && 0.03 & 0.03 \\
EntryP10 	& 0.04 & 0.02 && 0.10 & 0.02 && 0.11 & 0.03 && 0.56 & 0.04 && 0.44 & 0.03 \\
CastleP19 	& 1.48 & 0.06 && 1.48 & 0.06 && 2.46 & 0.06 && 3.69 & 0.05 && 1.57 & 0.05 \\
CastleP30 	& 0.53 & 0.05 && 0.47 & 0.05 && 0.77 & 0.05 && 1.97 & 0.05 && 0.78 & 0.05 \\
\bottomrule
\end{tabular}
 }
\end{table*}

\begin{table*}[!htb]
\centering
\ra{1.25}
\caption{Mean errors (meters) on camera translations for the EPFL benchmark.
\label{tab_translations}}
\smallskip
\scalebox{0.72}{
\begin{tabular}{@{}lllllllllllllllllll@{}}
\toprule
     & \multicolumn{2}{c}{\namealgo-GT}  
   & & \multicolumn{2}{c}{\namealgo-Iter} 
   & & \multicolumn{2}{c}{\namealgo-MCB}
   & & \multicolumn{2}{c}{Ozyesil et al.} 
   & & \multicolumn{2}{c}{\small \textsc{R-GoDec}+Brand et al.} 
   & & \multicolumn{1}{c}{\small Moulon et al.}
   & & \multicolumn{1}{c}{\small \textsc{Bundler}} \\
\cmidrule{2-3} \cmidrule{5-6} \cmidrule{8-9}
\cmidrule{11-12} \cmidrule{14-15} \cmidrule{17-17} \cmidrule{19-19}
Dataset 
& pre BA & post BA  & 
& pre BA & post BA  &
& pre BA & post BA  & 
& pre BA & post BA  &
& pre BA & post BA  & 
&  post BA & 
&  post BA\\
\midrule
HerzJesuP8  &  0.004 & 0.004 && 0.659 & 0.004 && 0.038 & 0.004 && 0.007 & 0.005 && 0.009 & 0.004 && 0.004 && 0.016\\
HerzJesuP25 &  0.008 & 0.008 && 1.152 & 0.022 && 0.357 & 0.008 && 0.065 & 0.009 && 0.038 & 0.009 && 0.005 && 0.021\\
FountainP11 &  0.004 & 0.003 && 0.236 & 0.003 && 0.008 & 0.003 && 0.004 & 0.003 && 0.006 & 0.003 && 0.003 && 0.007 \\
EntryP10    &  0.009 & 0.008 && 0.309 & 0.008 && 0.349 & 0.009 && 0.203 & 0.010 && 0.433 & 0.009 && 0.006 && 0.055\\
CastleP19   &  0.709 & 0.034 && 4.986 & 0.034 && 3.967 & 0.035 && 1.769 & 0.032	&& 1.493 & 0.036 && 0.026 && 0.344 \\
CastleP30   &  0.212 & 0.032 && 1.974 & 0.035 && 3.866 & 0.034 && 1.393 & 0.030	&& 1.123 & 0.030 && 0.022 && 0.300 \\
\bottomrule
\end{tabular}
 }
\end{table*}



\paragraph{Large-scale Datasets.}

We test our technique on irregular large-scale collections of images taken from \cite{Snavely14}, for which recovering camera orientations/locations is challenging.

Since our {\sc Matlab} implementation of Horton's Algorithm is too slow for large datasets, we do not compute the scales through MCB in this experiment.

We compared \namealgo with a recent technique -- called 1DSfM \cite{Snavely14} -- which performs robust translation synchronization. 
Following the experiments in \cite{Snavely14}, we used the output of \textsc{Bundler} \cite{Snavely06} as reference solution, and we compute the optimal transformation between this solution and our estimate with least median of squares (LMedS), using correspondences between camera centres. 

Results are reported in Tab.~\ref{tab_large}, which shows the median errors of motion synchronization before 
applying bundle adjustment.
We also report the number of cameras reconstructed and the percentage of missing pairs, which refer to the largest parallel-rigid subgraph, extracted as explained in \cite{Ozyesil2013stable}.
The results of 1DSfM  are taken from \cite{Snavely14}, where rotation errors are not analysed. 
\namealgo with iterative scale estimate performs equal or better than 1DSfM in 7 cases out of 11, and it recovers camera rotations accurately. 

Computation times of \namealgo ({\sc Matlab} implementation on a  MacBook Air with i5 dual-core @ 1.3 GHz) reported in Tab.~\ref{tab_large} are hardly comparable with those reported in 
\cite{Snavely14}, as they refer to a compiled code on a much powerful computer. However, if we can assume that the speed gain from {\sc Matlab} 
to C++ (for non-trivial algorithms) is at least 10 times, as common wisdom suggests, we might then conjecture that \namealgo implemented in C++ would compare favourably with 1DSfM. Moreover, performing parallel computation for updating scales/weights could further improve its computational efficiency.

\begin{table}[!htbp]
\centering
\ra{1.25}
\caption{Median  errors (rotation in degrees, translation in metres) on the datasets from \cite{Snavely14} \emph{before BA}.
Boldface denotes the lowest translation error. Times are in minutes.
\label{tab_large}}
\smallskip
\scalebox{0.72}{
\begin{tabular}{@{}lccrrrrrrrrr@{}}
\toprule
    &&&&  \multicolumn{3}{c}{\namealgo-Iter}  && \multicolumn{2}{r}{1DSfM}  \\
\cmidrule{5-7} \cmidrule{9-10}
Dataset & n & miss \% && rot. & tra. &time && tra. & time\\ 
\midrule
Roman Forum	      & 1102& 89 && 2.1  &     13.5 & 16.3 && {\bf 6.1} & 3.5 \\ 
Vienna Cathedral  & 898 & 75 && 1.6  &        7 & 17.6 && {\bf 6.6} & 5   \\ 
Alamo             & 606 & 50 && 1.3  &      1.5 & 14   && {\bf 1.1} & 2.6 \\ 
Notre Dame	      & 553 & 32 && 0.8  & {\bf 0.5}& 14.7 &&       10  & 2.6 \\ 
Tower of London   & 489 & 81 && 2.8  & {\bf 7.3}& 3.0 &&       11   & 1.3 \\   
Montreal N.~Dame  & 467 & 53 && 0.6  & {\bf 0.8}& 6.3 &&       2.5  & 1.9 \\ 
Yorkminster       & 448 & 73 && 1.9  &       7.2& 3.2 && {\bf 3.4}  & 2   \\ 
Madrid Metropolis & 370 & 69 &&  5   & {\bf 9.6}& 2.5 &&       9.9  & 0.7 \\ 
NYC Library       & 358 & 71 && 3.1  & {\bf 2.5}& 2.2 && {\bf 2.5}  & 1.3 \\ 
Piazza del Popolo & 345 & 60 && 0.9  & {\bf 1.6}& 2.2 &&       3.1  & 1	  \\ 
Ellis Island	  & 240 & 33 && 0.8  & {\bf 2.8}& 1.8 &&       3.7  & 0.5 \\ 
\bottomrule
\end{tabular}
 }
\end{table}

The rundown of these experiments with real datasets shows that, endowed with IRLS to withstand outliers  and combined with a method for estimating the unknown epipolar scales, \textbf{\namealgo can compete with state-of-the-art global pipelines.}

\section{Conclusion}
We presented a new closed-form method for motion synchronization
in $SE(3)$. The method is fast and simple, being based on a spectral decomposition, 
and theoretically relevant, for it works in the manifold of rigid motions.

Our experiments showed that our method: i) has the same accuracy as its closest competitor \cite{Govindu04} but it is much faster, and ii) combined with a method for estimating the unknown translation norms, 
it can be profitably used in a global structure from motion pipeline with state of the art performances.


The {\sc Matlab} implementation of \namealgo
will be made publicly available.


{\small
\bibliographystyle{ieee}
\bibliography{references}
}

\end{document}